%%%%%%%%%%%%%%%%%%%%%%%%%%%%%%%%%%%%%%%%%%%%%%%%%%%%%%%%%%%%%%%%%%%%%%%%%%%%%%%%
%2345678901234567890123456789012345678901234567890123456789012345678901234567890
%        1         2         3         4         5         6         7         8

\documentclass[letterpaper, 10 pt, conference]{ieeeconf}  % Comment this line out if you need a4paper

% \linespread{0.91}

%\documentclass[a4paper, 10pt, conference]{ieeeconf}      % Use this line for a4 paper

\IEEEoverridecommandlockouts                              % This command is only needed if 
                                                          % you want to use the \thanks command

\overrideIEEEmargins                                      % Needed to meet printer requirements.

%In case you encounter the following error:
%Error 1010 The PDF file may be corrupt (unable to open PDF file) OR
%Error 1000 An error occurred while parsing a contents stream. Unable to analyze the PDF file.
%This is a known problem with pdfLaTeX conversion filter. The file cannot be opened with acrobat reader
%Please use one of the alternatives below to circumvent this error by uncommenting one or the other
%\pdfobjcompresslevel=0
%\pdfminorversion=4

% See the \addtolength command later in the file to balance the column lengths
% on the last page of the document

% The following packages can be found on http:\\www.ctan.org
%\usepackage{graphics} % for pdf, bitmapped graphics files
%\usepackage{epsfig} % for postscript graphics files
%\usepackage{mathptmx} % assumes new font selection scheme installed
%\usepackage{times} % assumes new font selection scheme installed
%\usepackage{amsmath} % assumes amsmath package installed
%\usepackage{amssymb}  % assumes amsmath package installed

\usepackage{xcolor}
\usepackage{amssymb}
\usepackage{graphicx}
\usepackage{mathtools}
\usepackage{amsmath}
\usepackage{gensymb}
\usepackage{float}
\usepackage{multirow}
\usepackage{commath}
\usepackage{makecell}
\usepackage{tabularx}
    \newcolumntype{L}{>{\raggedright\arraybackslash}X}
\usepackage[utf8]{inputenc}
\usepackage[english]{babel}
% These relax packages are needed to avoid conflicts between amsthm and other packages. Without this \begin{proof} will cause an error.

\usepackage{amsthm}

\usepackage{subcaption}

\newcommand{\no}{\noindent}

\newtheorem{prop}{Proposition}[section]
% \DeclarePairedDelimiter\abs{\lvert}{\rvert}%

         % = `top' strut
   % = `bottom' strut
\title{\LARGE \bf
DWA-RL: Dynamically Feasible Deep Reinforcement Learning Policy for Robot Navigation among Mobile Obstacles 
% Spatially aware Dynamic Obstacle Avoidance with Deep Reinforcement Learning
}

\author{Utsav Patel, Nithish K Sanjeev Kumar, Adarsh Jagan Sathyamoorthy and Dinesh Manocha.
% $^{1}$ $^{2}$% <-this % stops a space
% \thanks{*This work was not supported by any organization}% <-this % stops a space
% \thanks{$^{1}$Netherlands
%         {\tt\small albert.author@papercept.net}}%
% \thanks{$^{2}$rsity,
%           {\tt\small b.d.researcher@ieee.org}}%
\thanks{ This work was supported in part by ARO Grants W911NF1910069, W911NF1910315 and Intel.} %Use only for final RAL version
}

\begin{document}

\maketitle
\thispagestyle{empty}
\pagestyle{empty}

%%%%%%%%%%%%%%%%%%%%%%%%%%%%%%%%%%%%%%%%%%%%%%%%%%%%%%%%%%%%%%%%%%%%%%%%%%%%%%%%
\begin{abstract}

We present a novel Deep Reinforcement Learning (DRL) based policy to compute dynamically feasible and spatially aware velocities for a robot navigating among mobile obstacles. Our approach combines the benefits of the Dynamic Window Approach (DWA) in terms of satisfying the robot's dynamics constraints with state-of-the-art DRL-based navigation methods that can handle moving obstacles and pedestrians well. Our formulation achieves these goals by embedding the environmental obstacles' motions in a novel low-dimensional observation space. It also uses a novel reward function to positively reinforce velocities that move the robot away from the obstacle's heading direction leading to significantly lower number of collisions. We evaluate our method in realistic 3-D simulated environments and on a real differential drive robot in challenging dense indoor scenarios with several walking pedestrians. We compare our method with state-of-the-art collision avoidance methods and observe significant improvements in terms of success rate (up to 33\% increase), number of dynamics constraint violations (up to 61\% decrease), and smoothness. We also conduct ablation studies to highlight the advantages of our observation space formulation, and reward structure.
\end{abstract}

\section{Introduction}

There has been considerable interest in using Deep Reinforcement Learning (DRL)-based local planners \cite{crowdsteer,densecavoid,everett2019collision,Multirobot_collison_avoidance} to navigate a non-holonomic/differential drive robot through environments with moving obstacles and pedestrians. They are effective in capturing and reacting to the obstacles' motion over time, resulting in excellent mobile obstacle avoidance capabilities. In addition, these methods employ inexpensive perception sensors such as RGB-D cameras or simple 2-D lidars and do not require accurate sensing of the obstacles. However, it is not guaranteed that the instantaneous robot velocities computed by DRL-based methods will be \textit{dynamically feasible} \cite{non-holonomic-constraints,DWA}. That is, the computed velocities may not obey the acceleration and non-holonomic constraints of the robot, becoming impossible for the robot to move using them. This leads to highly non-smooth and jerky trajectories. 

% Learning to compute smooth, dynamically feasible velocities during navigation happens through a DRL method's reward function, where such velocities and other desirable behaviors are rewarded, and collisions and non-desirable behaviors are penalized. 
Desirable behaviors such as computing dynamically feasible velocities are developed using a DRL method's reward function, where they are positively rewarded and undesirable behaviors such as collisions are penalized. However, a fully trained policy could over-prioritize the collision avoidance behavior over dynamic feasibility, if the penalty for collision is not appropriately balanced with the reward for computing feasible velocities \cite{rewardBalancing}. Therefore, acceleration limits and the non-holonomic constraints of the robot may not be satisfied. It is crucial that the policy account for such fundamental constraints especially when the robot navigates among pedestrians and other mobile obstacles. 

Another issue with such methods \cite{crowdsteer,densecavoid} is that they use high-dimensional data such as RGB or depth images as inputs during training to detect and observe obstacles. This greatly increases the overall training time and makes it harder for the policy to generalize the behaviors learnt in one environment to another. 

\begin{figure}[t]
      \centering
      \includegraphics[width=\columnwidth,height=5.5cm]{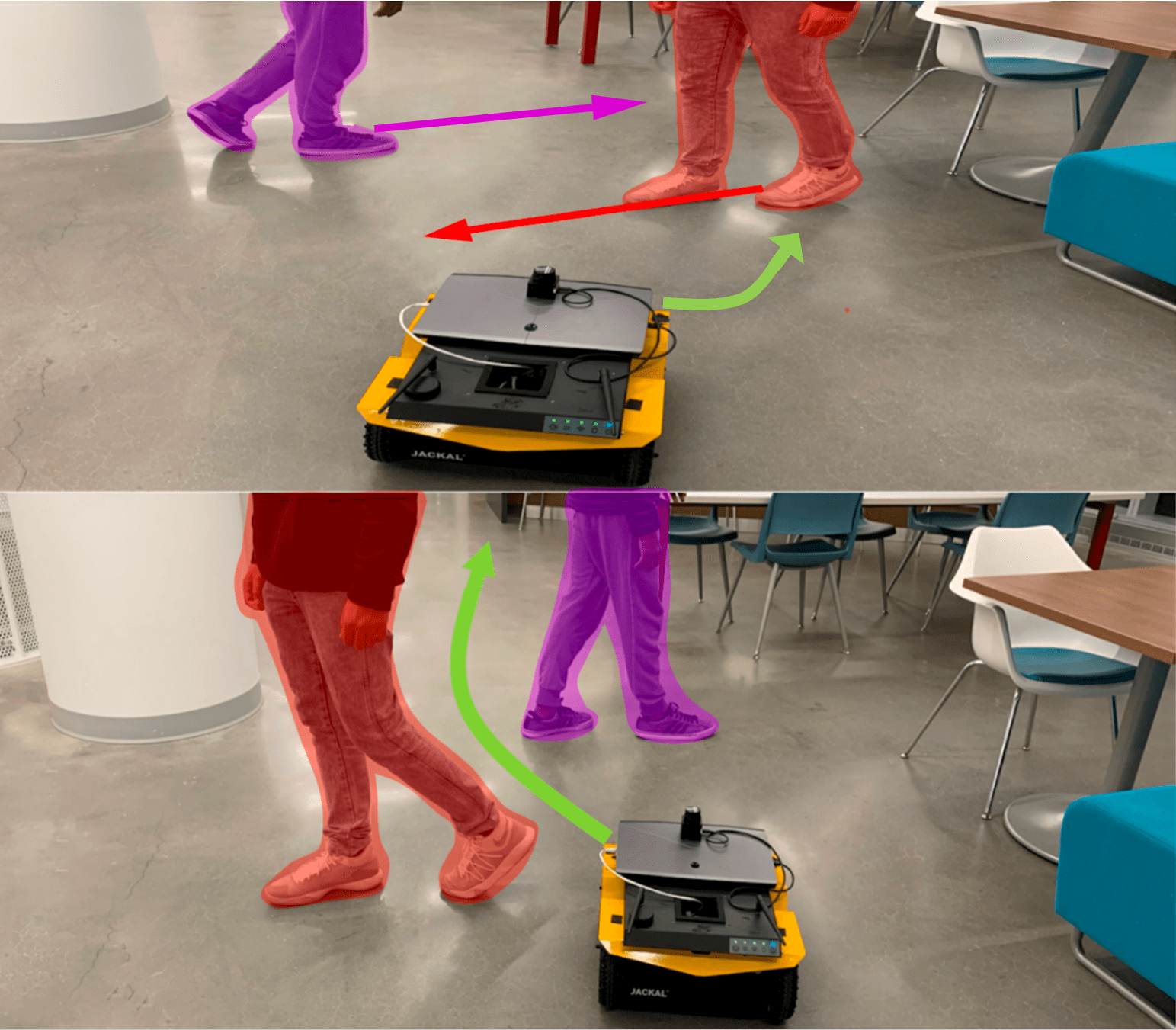}
      \caption {\small{Our robot avoiding mobile obstacles using dynamically feasible, smooth, spatially-aware velocities. The red and violet arrows indicate the obstacles' motion and green arrows shows the robot's trajectory in two time instances for different obstacle positions. Our hybrid approach, DWA-RL, considers the motion of the moving obstacles over time in its low-dimensional observation space which is used to compute the robot velocities. This results in fewer collisions than DWA \cite{DWA}}, and DRL-based methods \cite{Multirobot_collison_avoidance}. Since our method computes the robot velocities based on DWA's feasible velocity space, the computed robot velocities are guaranteed to obey the acceleration and non-holonomic constraints of the robot.}
      \label{fig:cover-image}
      \vspace{-10pt}
\end{figure}

On the other hand, the Dynamic Window Approach (DWA) \cite{DWA}, is a classic navigation algorithm that accounts for the robot's dynamics constraints and guarantees that the velocities in a space known as the \textit{dynamic window} are collision-free and feasible/achievable for the robot within a time horizon $\Delta t$. However, DWA's formulation only considers the robot's sensor data at the current time instant to make decisions. As a result, avoiding mobile obstacles becomes challenging, leading to higher number of collisions \cite{PredictiveDWA}.

{\bf Main Results:}  We present a hybrid approach, DWA-RL, that combines the benefits of DWA and DRL-based methods for navigation in the presence of mobile obstacles. We present a DRL-based collision avoidance policy that utilizes a novel observation space formulation and a novel reward function to generate spatially aware, collision-free, dynamically feasible velocities for navigation. We show that our approach has a superior performance compared to DWA and a DRL-based method \cite{Multirobot_collison_avoidance} in terms of success rate, number of dynamics constraints violations, and smoothness. The main contributions of our work include:

\begin{itemize}
\item A novel formulation for the observation space, based on the concept of dynamic window, is used to train our DRL-based navigation policy. The observation space is constructed by calculating the robot's feasible velocity set at a time instant and the costs corresponding to using those velocities in the past \textit{n} time instants. This formulation embeds the time evolution of the environment's state and preserves the dynamic feasibility guarantees of DWA (Section \ref{Sec:Our_Approach}). This leads to a significantly lower dimensional observation space unlike other DRL methods \cite{crowdsteer,densecavoid}. This also results in significantly lower training times, and easier sim-to-real transfer of the fully trained policy.

\item A novel reward function that is shaped such that the robot's navigation is more spatially aware of the obstacles' motion. That is, the robot is rewarded for navigating in the direction opposite to the heading direction of obstacles. This leads to the robot taking maneuvers around moving obstacles. This is different from DWA, which might navigate directly into the path of a mobile obstacle or collide with it. Overall, our approach reduces the collision rate by 33\% in dynamic environments as compared to DWA.

\end{itemize}

We evaluate our method and highlight its benefits over prior methods in four high-fidelity 3-D simulated environments that correspond to indoor and outdoor scenes with many static and moving obstacles. To demonstrate the sim-to-real capabilities of our method, we use DWA-RL to navigate a real differential drive robot using a simple 2-D lidar in indoor scenes with randomly walking pedestrians.
\section{Related Work}

\subsection{Collision Avoidance in Dynamic Scenes}
Global collision avoidance methods \cite{dijkstra, Astar, RRT} compute an optimal trajectory for the entire route, but they generally work offline which is not suitable for dynamic obstacles. On the other hand, vector-based local approaches such as DWA\cite{DWA} or other multi-agent methodss~\cite{wolinski2014parameter} use limited sensory information and are computationally efficient when avoiding static obstacles.

Several works have extended DWA's capabilities to avoid mobile obstacles by using techniques such as D* graph search \cite{D*DWA}, look-ahead to detect non-convex obstacles \cite{vspaceApproach}, or by extending beyond the local dynamic window to compute possible future paths using a tree \cite{DW4DOT&DW4DO}. The Curvature-Velocity method \cite{curvatureVM} is another method similar to DWA which formulates collision avoidance as a constrained optimization problem incorporating goal and vehicle dynamics.
% Add limitation of these DWA extensions.

\subsection{DRL-based Collision Avoidance}
There have been numerous works on DRL-based collision avoidance in recent years. Methods such as \cite{monocularVisionRL} use a deep double-Q network for depth prediction from RGB images for collision avoidance in static environments, whereas more advanced methods \cite{CNNvisuomotorPolicy} use Convolutional Neural Networks to model end-to-end visuomotor navigation capabilities. 

An end-to-end obstacle avoidance policy for previously unseen scenarios filled with static obstacles a few pedestrians is demonstrated in \cite{DLnavigation}. A decentralized, scalable, sensor-level collision avoidance method was proposed in \cite{Multirobot_collison_avoidance}, whose performance was improved using a new hybrid architecture between DRL and Proportional-Integral-Derivative (PID) control in \cite{JiaPan2}. Assuming that pedestrians aid in collision avoidance, a cooperative model between a robot and pedestrians was proposed in \cite{JHow1} for sparse crowds. An extension to this work using LSTMs \cite{LSTMobstacletrackingRL} to capture temporal information enabled it to operate among a larger number of pedestrians.

A few deep learning-based works have also focused on training policies that make the robot behave in a socially acceptable manner \cite{gail,socially-aware} and mitigate the freezing robot problem \cite{densecavoid,Frozone}. However, such policies do not provide any guarantees on generating dynamically feasible robot velocities.

% For instance, socially acceptable navigation models were developed using simple depth camera images and GAIL (General Adversarial Imitation Learning) in \cite{gail}. DenseCAvoid \cite{densecavoid} utilized pedestrian motion prediction models to avoid potential collision spaces and reduce the occurrence of the freezing robot problem. 

% Add references to our newer works
\section{Background}
In this section we provide an overview of the different  concepts and components used in our work.

\subsection{Symbols and Notations}
A list of symbols frequently used in this work is shown in Table \ref{tab:symbol_defn}. Rarely used symbols are defined where they are used.
\begin{table}[]
    \centering
    \begin{tabularx}{\linewidth}{|c|L|} 
\hline
\textbf{Symbols} & \textbf{Definitions}  \\
\hline
k & Number of linear or angular velocities robot can execute at a time instant \\
\hline
n & Number of past time instants used in observation space formulation \\ 
\hline
$v, \omega$ & Robot's linear and angular velocities \\
\hline
$\dot{v}_l, \dot{\omega}_l$ & Robot's linear and angular acceleration limits \\
\hline
\textit{distobs}$^{t}(v, \omega$) & Function gives the distance to the nearest obstacle at time \textit{t} from the trajectory generated by velocity vector($v, \omega$) \\
\hline
\textit{dist}($\textbf{p}_1, \textbf{p}_2)$ & Euclidean distance between $\textbf{p}_1$ and $\textbf{p}_2$ (two 2-D vectors). \\
\hline
$v_{a}, \omega_a$ & Robot's current linear and angular velocity \\ 
\hline
% $\dot{v}_b, \dot{\omega}_b$ & Maximum linear and angular braking deceleration \\
% \hline
% $\pi_{\theta}(v^t | o^t)$ & Policy Network \\
% \hline
% $(x , y)$ & Coordinates of an entity relative to the odom frame \\
% \hline
$d_t$ & Distance between the pedestrian and robot \\
\hline
r & Reward for a specific robot action during training \\ 
\hline
$R^{rob}$ & Robot's radius \\
\hline
$\textbf{p}^t_{rob}$ & Position vector of the robot in the odometry frame at any time instant t \\
\hline
$\textbf{g}$ & Goal location vector \\
\hline
$EndPoint(v_i, \omega_i)$ & Function to compute the end point of an trajectory generated by a ($v_i, \omega_i$) vector \\

% \multirow{3}{4em}{Multiple row} & cell2  \\ 
% & cell5 \\ 
% & cell8  \\ 
\hline
\end{tabularx}
    \caption{\small{List of symbols used in our work and their definitions.}}
    \label{tab:symbol_defn}
\vspace{-15pt}
\end{table}

\subsection{Dynamic Window Approach}
The Dynamic Window Approach (DWA) \cite{DWA} mainly uses the following two stages to search for a collision-free, and reachable [v, $\omega$] velocity vector in a 2-dimensional velocity space known as the dynamic window. The dynamic window is a discrete space with $k^2$ [v, $\omega$] velocity vectors, where k is the number of linear and angular velocities that the robot can execute at any time instant.

\subsubsection{Search Space}
The goal of the first stage is to generate a space of reachable velocities for the robot. This stage involves the following steps.

\textbf{Velocity Vectors Generation:}
In this step, according to the maximum linear and angular velocities the robot can attain, a set V of [v, $\omega$] vectors is generated. Each velocity vector in the set corresponds to an arc of a different radius along which the robot can move along. The equations describing the trajectory of the robot for different [v, $\omega$] vectors can be found in \cite{DWA}. 

\textbf{Admissible Velocities:}
After forming set V, for each $[v, \omega] \in V$, the distance to the nearest obstacle from its corresponding arc is computed. The [v, $\omega$] vector is considered \textit{admissible} only if the robot is able to stop before it collides with the obstacle. The admissible velocity set $V_{ad}$ is given by, 

\begin{equation}
\begin{split}
V_{ad} = \{
 v, \omega \} \hspace{0.75cm} \text{Where, } 
 v \leq \sqrt{2 \cdot distobs(v, \omega) \cdot \dot{v_b}}, \\
 \omega \leq \sqrt{2 \cdot distobs(v, \omega) \cdot \dot{{\omega}_b}} 
 \end{split}
\end{equation}

\textit{dist(v, $\omega$)}, is the distance to the nearest obstacle on the arc.

\textbf{Dynamic Window:}
The next step is to further prune the set $V_{ad}$ to remove the velocities that are not achievable within a $\Delta t$ considering the robot's linear and angular acceleration limits. This final set is called the dynamic window and is formulated as,

% \vspace{-10}
\begin{multline} \label{DynamicFeasibility}
    V_d = \{v, \omega | v \in [v_a - \Dot{v}_l \cdot \Delta t, v_a + \Dot{v}_l \cdot \Delta t ], \\
\omega \in [{\omega}_a - \Dot{\omega}_l \cdot \Delta t, {\omega}_a + \Dot{\omega}_l \cdot \Delta t ] \}.
\end{multline}

\subsubsection{Objective Function Optimization}
In the second stage, the $[v, \omega]$, which maximizes the objective function defined in equation \ref{DWA-objective-function}, is searched for in $V_d$.

\begin{equation}
G (v, \omega) =  \sigma( \alpha.heading(v, \omega) + \beta.distobs(v, \omega) + \gamma.vel(v,\omega)).
\label{DWA-objective-function}
\end{equation}

For a [v, $\omega$] executed by the robot, \textit{heading()} measures the robot's progress towards the goal (more progress $\implies$ higher value), \textit{dist()} measures the robot's distance from the nearest obstacles (more distance $\implies$ higher value), and the \textit{vel()} function checks that $v \ne 0$. $\alpha, \beta$ and $\gamma$ denote weighing constants that can be tuned by the user.

% Remove for ICRA version
Obstacle information embedded in the velocity space is utilized to select the optimal velocity pair. The [v, $\omega$] vector computed by DWA may be a local minimum. However, this issue can be mitigated if the connectivity of free space to the goal is known.

% The admissible velocity set $V_{ad}$ is then formed as, 

% \vspace{-10pt}
% \begin{multline}
% V_{ad} = \{v, \omega | v \leq \sqrt{2 \cdot distobs^{t_c}(v, \omega) \cdot \dot{v_b}}, \\
%  \omega \leq \sqrt{2 \cdot distobs^{t_c}(v, \omega) \cdot \dot{{\omega}_b}} \}.    
% \end{multline}
\begin{figure*}[t]
      \centering
      \includegraphics[width=\textwidth, height=6cm]{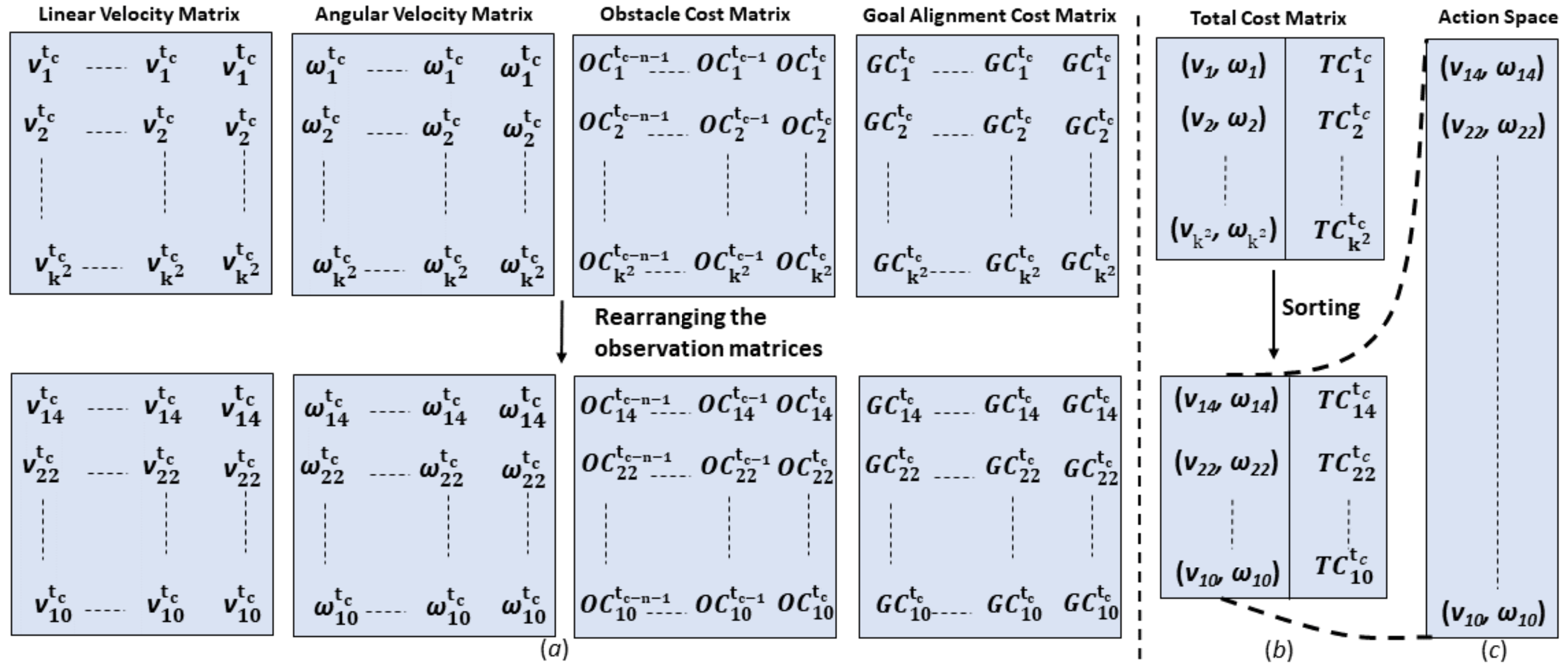}
      \caption{\small{\textbf{(a)[Top]} The initial construction of our observation space. Initially, the linear and angular velocity matrices ( $[v, \omega] \in V_f$) along with their obstacle and goal alignment cost matrices for \textit{n} time instants are constructed. \textbf{[Bottom]} The values in the four matrices are rearranged in the descending order of the total cost value at the current time instant. In this case shown, [$v_{14}, \omega_{14}$] has the least total cost. \textbf{(b)[Top]} The total costs TC for each velocity vector belonging to set $V_f$. \textbf{[Bottom]} The velocity vectors rearranged in the descending order of the total cost at the current time instant. \textbf{(c)} The action space for the current time step is obtained by sorting the feasible velocities vectors $(v, \omega)$ in the descending order of the total cost value at the current time instant. }}
      \label{ObservationSpaceGeneration}
      \vspace{-15pt}
\end{figure*}

\subsection{DRL Policy Training} \label{Sec:PPO}
DRL-based collision avoidance policies are usually trained in simulated environments (similar to Fig.\ref{fig:Testing-Scenarios}) using a robot that uses the said policy to perform certain \textit{actions} based on environmental \textit{observations} to earn some \textit{rewards}. The robot's observation consists of information regarding its environment (such as the positions of obstacles), and the set of all observations that the robot's sensors can make is called its \textit{observation space} ($\textbf{o}^t$). The robot's actions are represented by the velocities that it can execute, and the set of all the robot's velocities is called its \textit{action space} ($\textbf{a}^t$).

The policy's objective during training is to maximize a \textit{reward function} by performing the actions which are rewarded and avoiding actions that are penalized. This proceeds until the robot continuously achieves the maximum reward for several consequent training iterations. Collision-free velocities can then be computed from the fully trained policy $\pi$ as,

\vspace{-15pt}
\begin{equation}
    [v, \omega] \sim \pi(\textbf{\textbf{a}}^t | \textbf{o}^t).
\end{equation} 

\section{Our Approach} \label{Sec:Our_Approach}
In this section, we explain the construction of our novel observation space, the reward function, and our network architecture. 
 \vspace{-5pt}
\subsection{Observation Space Generation} \label{Sec: Obs_Space}
The steps used in the observation space construction are detailed below. 

\subsubsection{Dynamically Feasible Velocity Vectors}
Unlike DWA, we do not first generate an admissible velocity set that contains collision-free robot velocities. Instead, we first compute sets of feasible/reachable linear and angular velocities (\textit{lin} = $[v_a - \Dot{v} \cdot \Delta t, v_a + \Dot{v} \cdot \Delta t]$ and \textit{ang} = $[{\omega}_a - \Dot{\omega} \cdot \Delta t, {\omega}_a + \Dot{\omega} \cdot \Delta t ]$) using equation \ref{DynamicFeasibility}. We discretize these sets \textit{lin} and \textit{ang} into k intervals such that the total number of [v, $\omega$] vectors obtained from the intervals is $k^2$. We then form the set of feasible velocities $V_f$ from these discretized sets as, 

\vspace{-10pt}
\begin{equation}
    V_f = \{(v, \omega) | v \in lin_k, \omega \in ang_k\}.
\end{equation}
\vspace{-15pt}

The velocity vectors in $V_f$ do not account for the locations of the obstacles in the current time instant $t_c$ or the past \textit{n}-1 time instants. Therefore, some velocities in $V_f$ could lead to collisions. The \textit{k} linear and angular velocities in $V_f$ are appended \textit{n}-1 times as column vectors in two matrices each of size ($k^2 \times n$) and the generated linear and angular velocity matrices are shown in the Fig. \ref{ObservationSpaceGeneration}(a).
\begin{figure}[h!]
      \centering
      \includegraphics[width=\columnwidth,height=4.0cm]{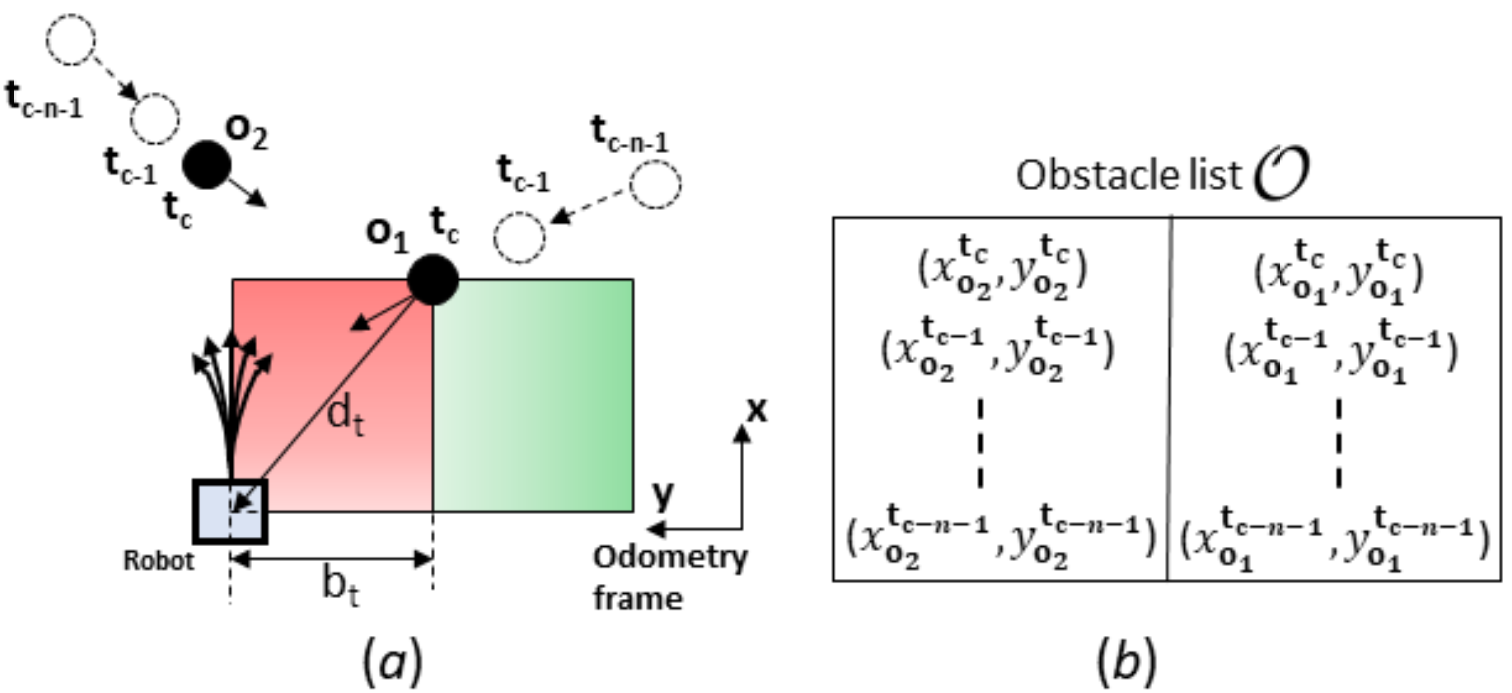}
      \caption { \small{Obstacle set construction. \textbf{(a)} The change in the location of the obstacle in the past n-1 time instances with respect to the location of the robot in the current time instance. The red region ($\mathcal{R}$) and the green region ($\mathcal{G}$) denote the regions of high risk and low risk of collisions respectively. They depend on the relative positions and motions of the robot and the obstacle. \textbf{(b)} The list of the obstacle sets obtained at various time instances each column corresponds to the location of the obstacles at particular time instance}}
      \label{fig:ObstacleLocations}
    %   \vspace{-10pt}
\end{figure}

\subsubsection{Obstacle sets}
We use a 2-D lidar scan to sense the location of the obstacles around the robot. For each time instant, the obstacle locations are obtained relative to a fixed odometry coordinate frame and stored in a set. The odometry frame is attached to the ground at the location from where the robot started. In Fig. \ref{fig:ObstacleLocations}(a), the locations of two obstacles in the current as well as in the past \textit{n}-1 time steps are shown. We add the set of obstacle locations in a list $\mathcal{O}$ of length \textit{n} (see Fig. \ref{fig:ObstacleLocations}(b)), where each row shows the set of obstacle locations for a specific time instant. We use $\mathcal{O}$ to incorporate information regarding the motion of various obstacles in the environment.

\subsubsection{Obstacle cost calculation}
Next, we calculate the \textit{obstacle cost} for every velocity vector in $V_f$ using the $distobs^t()$ function. Each vector in $V_f$ is forward simulated for a time duration $\Delta t$ to check if it leads to a  collision, given the obstacle positions in $\mathcal{O}$. The costs are calculated as,

% The velocity vectors that lead to a collision are given higher costs and the velocities that are collision-free are assigned costs inversely proportional to the least distance of the obstacle from the arc generated after executing the velocity vector. 
\vspace{-10pt}
\begin{equation}
    OC^{t_j}_i = 
    \begin{cases}
    \,c_{col}    \hspace{2cm}  \text{if} \, distobs^{t_j}(v_i, \omega_i) < R^{rob}, 
      \qquad \qquad \\
     \cfrac{1}{distobs^{t_j}(v_i, \omega_i)}   \hspace{0.75cm} \text{otherwise}. \\
    \end{cases}
\end{equation}

\no Where, $c_{col}$= 40. The Fig.\ref{ObservationSpaceGeneration} (a) shows the obtained ($k^2 \times n$) obstacle cost matrix.

\subsubsection{Goal alignment cost calculation} 
Each $[v, \omega]$ in $V_f$ is forward simulated for a time $\Delta t$ and the distance from the endpoint of the trajectory to the robot's goal is measured (equation \ref{eqn:GC1}). The velocity vectors that reduce the distance between the robot and the goal location are given a low cost. 

% \vspace{-10pt}
\begin{equation}
    GC_i^{t_c} = dist(EndPoint(v_i, \omega_i), \textbf{g}) \hspace{0.1cm} * \hspace{0.10cm} c_{ga}
    \label{eqn:GC1}
\end{equation}

\begin{figure*}[h!]
      \centering
      \includegraphics[width=\textwidth,height=4.5cm]{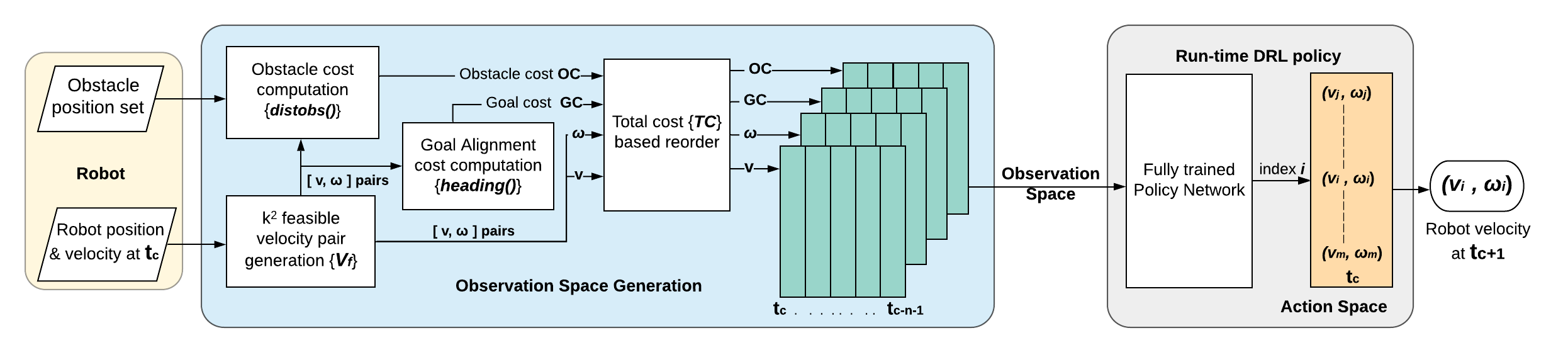}
     \caption{\small{Our method's run-time architecture. The observations such as obstacle positions measured by the robot's sensors (lidar in our case) and the robot's position and velocity at time $t_c$, along with the obstacle and goal-alignment costs are reordered (Section \ref{Sec:Pre-process}) to generate a ($k^2 \times n \times 4$) dimensional observation space (Section \ref{Sec:Obs_Space}) shown in green corresponding to time instant $t_c$. The fully trained DRL policy network (shown in Fig. \ref{fig:Network Architecture}) uses the observation space to compute the index of the output velocity in the action space.}}
     \label{flow_chart}
     \vspace{-10pt}
 \end{figure*}

The goal alignment cost is independent of the location of the obstacles around the robot, therefore the same cost for each pair is appended \textit{n} times to obtain a goal alignment cost matrix of shape ($k^2 \times n$) as seen in Fig. \ref{ObservationSpaceGeneration}(a), and in the equation \ref{eqn:GC2}.

\vspace{-10pt}
\begin{equation}
    GC_i^{t_c} = GC_i^{t_{c-1}} =......= GC_i^{t_{c-n-1}}
    \label{eqn:GC2}
\end{equation}

\no Where, $c_{ga}$= 2.5.

\subsubsection{Total cost calculation}
The total cost for the robot using a vector [$v_i, \omega_i$] for the current time instant $t_c$ is calculated as,

\vspace{-10pt}
\begin{equation}
    TC_i^{t_c} = OC^{t_c}_i + GC^{t_c}_i
\end{equation}

\no and is shown in Fig.\ref{ObservationSpaceGeneration}(b).

\subsubsection{Sorting the Velocity Vectors}\label{Sec:Pre-process}
The linear, angular, obstacle cost and goal alignment cost matrices obtained in Section \ref{Sec: Obs_Space} are now reordered to better represent which velocities in $V_f$ have the lowest costs given the obstacle positions for the past \textit{n} time instants. The velocity vectors are sorted in ascending order according to the total cost of the velocity vectors at the current time instant. The elements in the velocity and cost matrices are then reordered in same order.

% \vspace{3pt}
\subsubsection{Observation Space and Action Space} \label{Sec:Obs_Space}
Finally, our observation space is constructed using the reordered linear, angular matrices along with the obstacle and goal alignment cost matrices and stacking them to get a matrix of size ($k^2 \times n \times 4$). Our action space is the reordered set of feasible velocities for the robot at the current time instant (see Fig.\ref{ObservationSpaceGeneration}c). The observation space is then passed to the policy network (see Fig.\ref{flow_chart}).

\subsection{DRL Navigation Framework}
In this section, we detail the other components of our DRL policy's training, and run-time architecture.

\subsubsection{Reward Function Shaping}
Rewards for the basic navigation task of reaching the goal and avoiding collisions with obstacles are provided with high positive or negative values respectively. In order to make the training faster, the difference between distance from the goal in the previous and the current time instant is utilized in the reward function. This incentivizes the policy to move the robot closer to the goal each time step, or otherwise be penalized as, 

\vspace{-10pt}
\begin{equation}
    (r_g)^t =
    \begin{cases}
     r_{goal} \qquad \qquad \qquad \qquad \text{if} \, dist(\textbf{p}^{t}_{rob}, \textbf{g}) < 0.3 \text{m},\\
     -2.5(dist(dist(\textbf{p}^t_{rob}, \textbf{g})) - \textbf{p}^{t-1}_{rob}, \textbf{g}) \qquad   \text{otherwise}.
    \end{cases}
\end{equation}

\no We define the penalty for collision as,
\vspace{-10pt}
\begin{equation}
    (r_c)^t =
    \begin{cases}
     r_{collision} \ \qquad \qquad \text{if} \, dist(\textbf{p}^t_{rob}, \textbf{p}^t_{obs}) < 0.5 \text{m},\\
     0  \hspace{2.6cm}    \text{otherwise}.
    \end{cases}
\end{equation} 

\begin{figure}[t]
      \centering
      \includegraphics[height=3.0cm,width=\columnwidth]{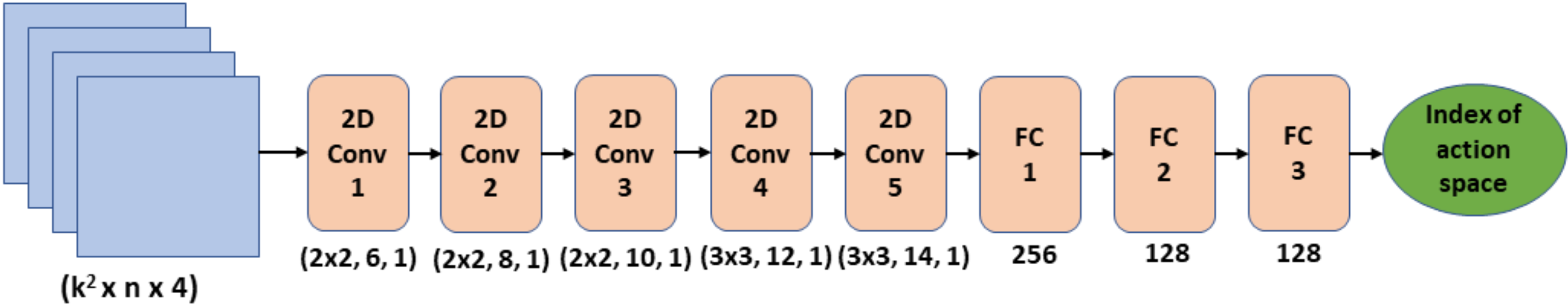}
      \caption {\small{Architecture of the policy network used for training. The input observation space is marked in blue, the network layers are marked in orange. The initial 5 layers are the convolutional layers and the remaining 3 layers are the fully connected layers. The output of the network is marked in green. }}
      \label{fig:Network Architecture}
    %   \vspace{-10pt}
\end{figure}

% The following component of the reward function reinforces \textit{spatially aware} velocities when handling dynamic obstacles. 

When the distance between a dynamic obstacle (located at $\textbf{p}^t_{obs} = [x^t_{obs}, y^t_{obs}]$) and the robot (located at $\textbf{p}^t_{rob} = [x^t_{rob}, y^t_{rob}]$) is less than a certain threshold, the robot receives the steering reward/penalty (equation \ref{eqn:spatial-awareness}). The parameters $d_t$ and $b_t$ which influence this reward are depicted in Fig.\ref{fig:ObstacleLocations}a, and defined as follows,

% \vspace{-10pt}
\begin{equation}
d_t = dist(\textbf{p}^t_{rob}, \textbf{p}^t_{obs}) \hspace{0.5cm} | \hspace{0.5cm} b_t =  y^t_{rob} - y^t_{obs}. 
\end{equation}

\vspace{-15pt}
\begin{equation}
    (r_{steer})^{t} =
    \begin{cases}
    -  | b_t | * r_{spatial} -
     \cfrac{r_{proximity}}{d_t}  \,\hspace{.3cm}  \text{if} \,  \hspace{.1cm} {p}^t_{rob} \in \mathcal{R}\\
    + | b_t | *  r_{spatial}   \hspace{2.3cm} \text{if} \hspace{.1cm}{p}^t_{rob} \in \mathcal{G}.\\
     
    \end{cases}
\label{eqn:spatial-awareness}
\end{equation} 

From equation \ref{eqn:spatial-awareness}, it can be seen that the robot is rewarded positively when it is in the green region $\mathcal{G}$ (behind the obstacle) shown in Fig.\ref{fig:ObstacleLocations}a and penalized when it is in the red region $\mathcal{R}$ (along the obstacle's heading direction). This reinforces \textit{spatially aware} velocities when handling dynamic obstacles i.e., velocities which move the robot away from an obstacle's heading direction, thereby reducing the risk of collision. 

\begin{prop}
Region $\mathcal{R}$ has a high risk of collision.
\end{prop}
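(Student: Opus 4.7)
The plan is to make precise what ``high risk of collision'' means and then argue geometrically that the red region $\mathcal{R}$, defined as the sector along the obstacle's instantaneous heading direction, is precisely where the predicted relative trajectory of the obstacle converges onto the robot. First I would fix the obstacle's estimated velocity $\mathbf{v}_{obs}^t$ from its positions across the past $n$ time instants in $\mathcal{O}$ and parameterize its predicted path as $\mathbf{p}^{t+\tau}_{obs} = \mathbf{p}^t_{obs} + \tau\,\mathbf{v}_{obs}^t$. I would then express the signed lateral offset $b_t$ and the along-track distance $d_t$ of the robot relative to the obstacle's heading frame so that $\mathcal{R}$ corresponds to the half-plane ahead of the obstacle along $\mathbf{v}_{obs}^t$, and $\mathcal{G}$ to the half-plane behind it.

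Next I would compute the time-to-closest-approach $\tau^\star$ between the obstacle's predicted path and the robot's current pose, and show that $\tau^\star > 0$ whenever $\mathbf{p}_{rob}^t \in \mathcal{R}$ while $\tau^\star \le 0$ whenever $\mathbf{p}_{rob}^t \in \mathcal{G}$. This means that in $\mathcal{R}$ the obstacle is actively approaching the robot, whereas in $\mathcal{G}$ the obstacle is already moving away. The minimum separation at $\tau^\star$ depends only on $b_t$ through $\lvert b_t\rvert\sin\theta$ for some angle $\theta$ bounded by the geometry, and for small $b_t$ this separation falls below $R^{rob}$, triggering the $distobs^{t+\tau}(v_i,\omega_i) < R^{rob}$ condition in the obstacle cost equation.

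I would then translate this geometric approach-statement into a statement about the dynamic window. Using the acceleration bounds $\dot v_l,\dot\omega_l$ in equation (\ref{DynamicFeasibility}), the set of velocity vectors available to the robot within reaction time $\Delta t$ shrinks as $\tau^\star$ shrinks, because a larger fraction of $V_f$ will forward-simulate into the predicted obstacle position and hence receive the collision cost $c_{col}$. A counting-style lemma then gives that the share of collision-free feasible velocities in $V_f$ is strictly smaller when $\mathbf{p}_{rob}^t \in \mathcal{R}$ than when $\mathbf{p}_{rob}^t \in \mathcal{G}$, which is a direct quantitative notion of ``high risk.''

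The main obstacle, I expect, will be making ``high risk'' formally meaningful without a probabilistic obstacle motion model: the constant-velocity extrapolation is only a predictor, so I would need to either assume bounded obstacle acceleration (so that the predicted path is a tube of bounded radius around the nominal ray) or phrase the conclusion as a worst-case statement over all obstacle trajectories consistent with the observed positions in $\mathcal{O}$. Handling the degenerate case $\mathbf{v}_{obs}^t \approx 0$, where $\mathcal{R}$ and $\mathcal{G}$ are not well-defined, will require a separate argument (e.g., treating the obstacle as static and falling back to DWA's admissibility bound), and this boundary case is where the cleanest version of the proof is likely to require the most care.
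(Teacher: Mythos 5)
Your first stage is mathematically the same move the paper makes, just dressed differently: the sign of the time-to-closest-approach $\tau^\star = -(\mathbf{p}_{rel}\cdot\mathbf{v}_{rel})/\lVert\mathbf{v}_{rel}\rVert^2$ is determined by the same dot product that the paper obtains by differentiating $D^2 = \lVert\mathbf{p}^t_{rob}-\mathbf{p}^t_{obs}\rVert^2$, so ``$\tau^\star>0$ in $\mathcal{R}$'' is exactly the paper's ``$dD/dt<0$ in $\mathcal{R}$.'' The paper stops there: it reads off sign conditions on the relative position and relative velocity components from the geometry of $\mathcal{R}$ and $\mathcal{G}$, concludes the separation is decreasing in $\mathcal{R}$ and (under an added assumption that $|v_y^{rob}-v_y^{obs}|\gg|v_x^{rob}-v_x^{obs}|$, plus ``comparable velocities'') increasing in $\mathcal{G}$, and equates ``distance decreasing'' with ``high risk.'' You go considerably further in two ways the paper does not: you quantify the miss distance at closest approach via the lateral offset $b_t$ and tie it to the $distobs < R^{rob}$ trigger in the obstacle cost, and you propose to define ``high risk'' operationally as the fraction of the dynamic window $V_f$ that receives the collision cost $c_{col}$. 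That second step is the more meaningful notion of risk for this planner, but it is also the part that needs the most additional work --- the fraction of colliding arcs in $V_f$ depends on obstacle size, range, and arc geometry, not on $\tau^\star$ alone, so your counting lemma would need explicit hypotheses to be strict. You also correctly flag issues the paper silently elides: the degenerate case $\mathbf{v}_{obs}\approx 0$ where $\mathcal{R}$ and $\mathcal{G}$ are undefined, and the fact that without a motion model ``risk'' has no probabilistic meaning (the paper's own argument is only an instantaneous, constant-velocity statement). In short, your plan subsumes the paper's proof as its first step and is a strictly more ambitious formalization; the paper's version is weaker but self-contained in four lines.
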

\begin{proof}
The distance between the obstacle 
\begin{equation}
    D^2 = (p_{x}^{rob} - p_{x}^{obs})^2 + (p_{y}^{rob} - p_{y}^{obs})^2 
\label{eqn:distance}
\end{equation}

We prove that the danger of collision in the red zone is high since the distance between the dynamic obstacle and the robot is decreasing. To see this, we differentiate the equation \ref{eqn:distance} on both sides,
\begin{multline}
    2D\frac{dD}{dt} = 2(p_{x}^{rob} - p_{x}^{obs})(\frac{dp_{x}^{rob}}{dt} - \frac{dp_{x}^{obs}}{dt} ) + \\ 2(p_{y}^{rob} - p_{y}^{obs})(\frac{dp_{y}^{rob}}{dt} - \frac{dp_{y}^{obs}}{dt} ) 
\label{eqn:derivativedistance}
\end{multline} 

From the Fig.\ref{fig:ObstacleLocations}a, we get the following conditions for the case where the obstacle moves to the left (with a positive Y component in velocity) in the odometry coordinate frame. Note that the conditions also hold if the obstacle had a velocity that moved it into $\mathcal{R}$. 

%obstacle and robot's motion converges in the left side
\begin{multline}
if (p_{x}^{rob}, p_{y}^{rob}) \in \mathcal{R} \\
(p_{y}^{rob} - p_{y}^{obs}) > 0  \hspace{.6cm} |  \hspace{.6cm} (p_{x}^{rob} - p_{x}^{rob}) < 0 \\
(V_{y}^{rob} - V_{y}^{obs}) < 0  \hspace{.6cm} |  \hspace{.6cm} (V_{x}^{rob} - V_{x}^{rob}) > 0
\label{eqn:redzone}
\end{multline}

\begin{multline}
if (p_{x}^{rob}, p_{y}^{rob}) \in \mathcal{G} \\
(p_{y}^{rob} - p_{y}^{obs}) < 0  \hspace{.6cm} |  \hspace{.6cm} (p_{x}^{rob} - p_{x}^{rob}) < 0 \\
(V_{y}^{rob} - V_{y}^{obs}) < 0  \hspace{.6cm} |  \hspace{.6cm} (V_{x}^{rob} - V_{x}^{rob}) > 0
\label{eqn:greenzone}
\end{multline}
Equation \ref{eqn:derivativedistance} implies,
\begin{multline}
    \frac{dD}{dt} = \frac{1}{D}[(p_{x}^{rob} - p_{x}^{obs})(v_{x}^{rob} - v_{x}^{obs} ) +\\ (p_{y}^{rob} - p_{y}^{obs})(v_{y}^{rob} - v_{y}^{obs} ) ]
\label{eqn:derivativedistance2}
\end{multline} 
Substituting conditions in equation \ref{eqn:redzone} and considering comparable velocities for the robot and obstacle,
\begin{equation}
    \frac{dD}{dt} < 0
\end{equation}
So, $dist(\textbf{p}^t_{rob}, \textbf{p}^t_{obs})$ is always a decreasing function in $\mathcal{R}$. This implies a higher risk of collision. 

Substituting conditions in equation \ref{eqn:greenzone},
\begin{equation}
    \frac{dD}{dt} > 0 \hspace{.3cm} 
\end{equation} 
% |(v_{y}^{rob} - v_{y}^{obs})| >> |(v_{x}^{rob} - v_{x}^{obs} )|

In $\mathcal{G}$, if we have $|(v_{y}^{rob} - v_{y}^{obs})| >> |(v_{x}^{rob} - v_{x}^{obs} )|$, then based on the signs of these components in the right hand side of equation \ref{eqn:derivativedistance} will be positive. This implies that $dist(\textbf{p}^t_{rob}, \textbf{p}^t_{obs})$ will be an increasing function in $\mathcal{G}$ if $v_{y}^{rob}$ is highly negative in y-axis. This is intuitive as a high velocity towards the negative y direction ($\mathcal{G}$ zone) is required to generate a spatially aware trajectory in the given scenario. Indirectly, velocities with highly negative $v_{y}^{rob}$ are positively rewarded in our formulation.

When the obstacle moves towards right relative to the odometry coordinate frame, the proof is symmetrical and still proves that $dist(\textbf{p}^t_{rob}, \textbf{p}^t_{obs})$ is a decreasing function in corresponding $\mathcal{R}$ constructed.
\end{proof}
 
In the case of an obstacle moving head-on the total steering reward is zero. In the presence of multiple dynamic obstacles around the robot, the union of the red zones is to be constructed for the total negative rewards. 

This is also supplemented by providing negative rewards \textit{inversely proportional to the distance} from all the obstacles in the sensor range of the robot. This reduces the danger of collision as negative reward is accumulated as the robot approaches the obstacle.  

\vspace{-15pt}
\begin{equation}
    (r_{dangerOfCollison})^t = - \cfrac{r_{dCollision}}{d_t}
\end{equation} 
 
We set $r_{goal}=2000, r_{collison}=$-$2000$, $r_{proximity }=10, r_{spatial}=25, r_{dCollison}=30$.

\vspace{3pt}

\subsubsection{Network Architecture}
The policy network architecture that we use is shown in Fig.\ref{fig:Network Architecture}.  Five 2-D convolutional layers, followed by 3 fully-connected layers are used for processing the observation space. ReLU activation is applied between the hidden layers. This architecture is much simpler and requires fewer layers for handling our observation space. 

\subsubsection{Policy Training}
We simulate multiple Turtlebot2 robots each with an attached lidar to train the models. The Turtlebots are deployed in different scenarios in the same simulation environment , to ensure that the model does not overfit to any one scenario. Our policy finishes training in less than 20 hours, which is significantly less than the 6 days it takes to train methods such as
\cite{crowdsteer,densecavoid}, which use similar training environments.

\subsubsection{Run-time Architecture}
The output of a fully trained policy network is the index \textit{i} that corresponds to a velocity pair in the action space. The $[v, \omega]$ vector at the $\textit{i}^{th}$ location in the action space is then used by the robot for navigation at the current time instant $t_c$. 
\begin{prop}
The velocity chosen by our fully trained policy will always obey the dynamics constraints of the robot.
\end{prop}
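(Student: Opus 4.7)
The plan is to argue this by tracing the construction of the action space back to the dynamic window formulation, and invoking the fact that the policy can only ever emit an index into this pre-filtered set. First I would recall from Section \ref{Sec: Obs_Space} that the set $V_f$ of feasible velocities is built by discretizing the intervals $\textit{lin} = [v_a - \dot{v}_l \cdot \Delta t,\ v_a + \dot{v}_l \cdot \Delta t]$ and $\textit{ang} = [\omega_a - \dot{\omega}_l \cdot \Delta t,\ \omega_a + \dot{\omega}_l \cdot \Delta t]$ into $k$ bins each, so that every $(v, \omega) \in V_f$ is, by construction, an element of the dynamic window $V_d$ defined in equation \ref{DynamicFeasibility}. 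Since $V_d$ is precisely the subset of velocity vectors achievable from the current $(v_a, \omega_a)$ within one control interval $\Delta t$ under the acceleration limits $\dot{v}_l,\ \dot{\omega}_l$, the inclusion $V_f \subseteq V_d$ already gives the acceleration-bound half of the dynamics constraints.

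Next I would observe that the action space at time $t_c$, as described in Section \ref{Sec:Obs_Space} and illustrated in Fig.\ \ref{ObservationSpaceGeneration}(c), is exactly the set $V_f$ with its entries permuted by the sorting step of Section \ref{Sec:Pre-process}. Because reordering does not change the underlying set, the action space equals $V_f$ as a set. The run-time architecture described in Fig.\ \ref{flow_chart} then outputs only an index $i$ into this action space and commands the corresponding $(v_i, \omega_i)$; the policy has no mechanism to emit any velocity outside $V_f$.

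Composing these two observations, the commanded velocity satisfies $(v_i, \omega_i) \in V_f \subseteq V_d$, so the linear and angular components individually obey the acceleration bounds of equation \ref{DynamicFeasibility}. The non-holonomic constraint is handled implicitly: the robot is a differential-drive platform whose admissible instantaneous motion is parametrized by pairs $(v, \omega)$ of linear and angular velocities, and every element of $V_f$ is such a pair, corresponding to an arc of well-defined radius as in the DWA trajectory model. Hence any output of the trained policy is simultaneously within the acceleration envelope and consistent with the robot's non-holonomic kinematics, which is the definition of dynamic feasibility used throughout the paper.

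The argument is essentially a construction/containment proof rather than an analytic one, so there is no genuinely hard step; the only place where care is needed is to make the permutation argument explicit, i.e.\ to point out that the sorting in Section \ref{Sec:Pre-process} relabels indices but preserves the underlying feasible set, and therefore the policy's selection of an index cannot escape $V_f$. This contrasts sharply with generic end-to-end DRL policies whose action heads regress arbitrary $(v, \omega)$ values and thus require soft penalties to approximate feasibility.
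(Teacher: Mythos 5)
Your proposal is correct and takes essentially the same route as the paper: the paper's own proof is a one-line containment argument noting that the action space is built from the dynamic-feasibility equations of DWA (equation \ref{DynamicFeasibility}), so any index the policy emits selects a velocity already inside the dynamic window. Your version simply spells out the chain $V_f \subseteq V_d$ and the fact that sorting only permutes indices, which the paper leaves implicit.
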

\begin{proof}
The proof follows trivially from the fact that our action space is a subset of our observation space (Fig.\ref{ObservationSpaceGeneration}c), which in turn is constructed using the dynamic feasibility equations of DWA. Thus, our policy preserves the dynamic feasibility guarantees of DWA.  
\end{proof}

\no Our full run-time architecture is shown in Fig.\ref{flow_chart}.

% \begin{figure*}[h!]
%       \centering
%       \includegraphics[width=\textwidth,height=4.5cm]{Images/flowchart5.0.png}
%     %    height = 3in]
%      \caption{\small{Our method's run-time architecture. The observations such as obstacle positions measured by the robot's sensors (lidar in our case) and the robot's position and velocity at time $t_c$, along with the obstacle and goal-alignment costs are reordered (Section \ref{Sec:Pre-process}) to generate a ($k^2 \times n \times 4$) dimensional observation space (Section \ref{Sec:Obs_Space}) shown in green corresponding to time instant $t_c$. The fully trained DRL policy network (shown in Fig. \ref{fig:Network Architecture}) uses the observation space to compute the index of the output velocity in the action space.}}
%      \label{flow_chart}
%      \vspace{-10pt}
%  \end{figure*}
\section{Results, Comparisons and Evaluations}

\subsection{Implementation}
We use ROS Melodic and Gazebo 9 to create the simulation environments for training and evaluating on a workstation with an Intel Xeon 3.6GHz processor and an Nvidia GeForce RTX 2080TiGPU. We implement the policy network using TensorFlow and use the PPO2 implementation provided by stable baselines to train our policy. 

To test the policy's sim-to-real transfer and generalization capabilities, we use it to navigate a Turtlebot 2 and a Jackal robot in challenging indoor scenes with randomly moving pedestrians (see attached video). DWA-RL does not require accurate sensing of the obstacles' positions in real-world scenes.

\subsection{Training Scenario}
The training environment used to train the DWA-RL policy is shown in Fig.\ref{fig:trainingEnv}. We use 4 robots in the environment that collect the training data in parallel, speeding up the overall training process. Each robot in the training environment encounters different type of static and dynamic obstacles while navigating towards the goal, this training methodology ensures that the policy does not overfit to a particular scenario and generalizes well during the testing phase.

\begin{figure}[t]
      \centering
      \includegraphics[width=8cm, height=4cm]{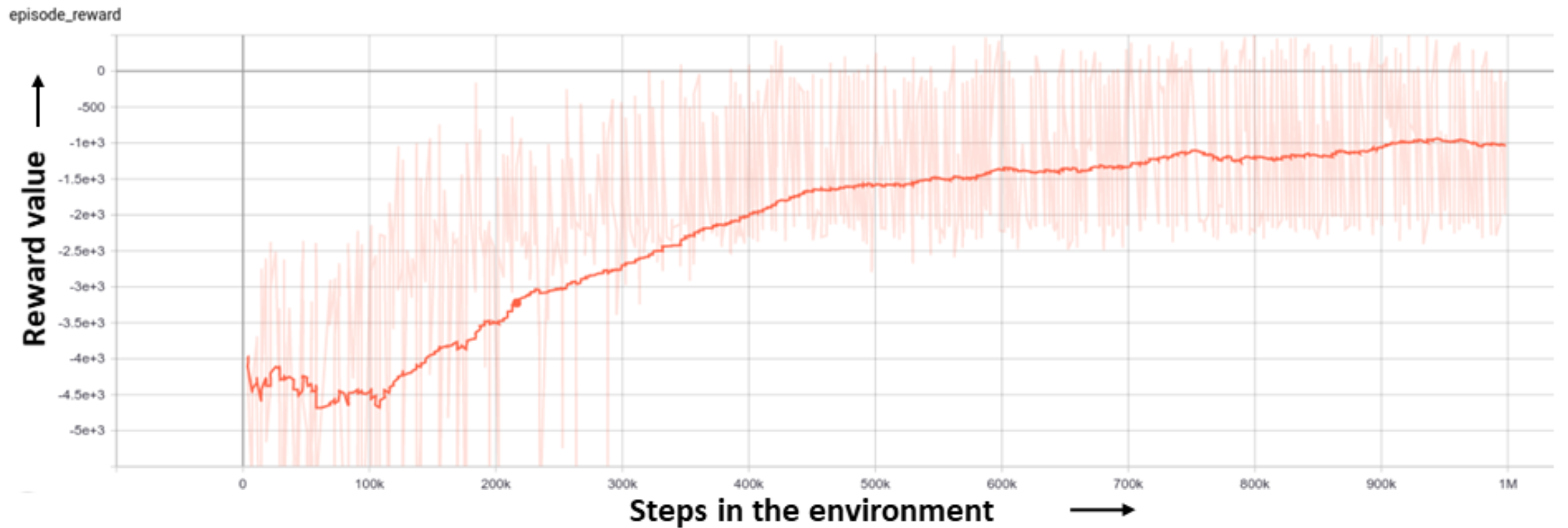}
      \caption {\small{The plot shows the reward values obtained by the policy during training. The horizontal axis shows the number of steps executed by the policy in the training environment and the vertical axis shows the reward value for each step. The policy converges to a stable reward value after executing about 700k steps in the training environment.}}
      \label{fig:convergenceGraph}
    %   \vspace{-5pt}
\end{figure}

\begin{figure}[t]
      \centering
      \includegraphics[width=8cm, height=4cm]{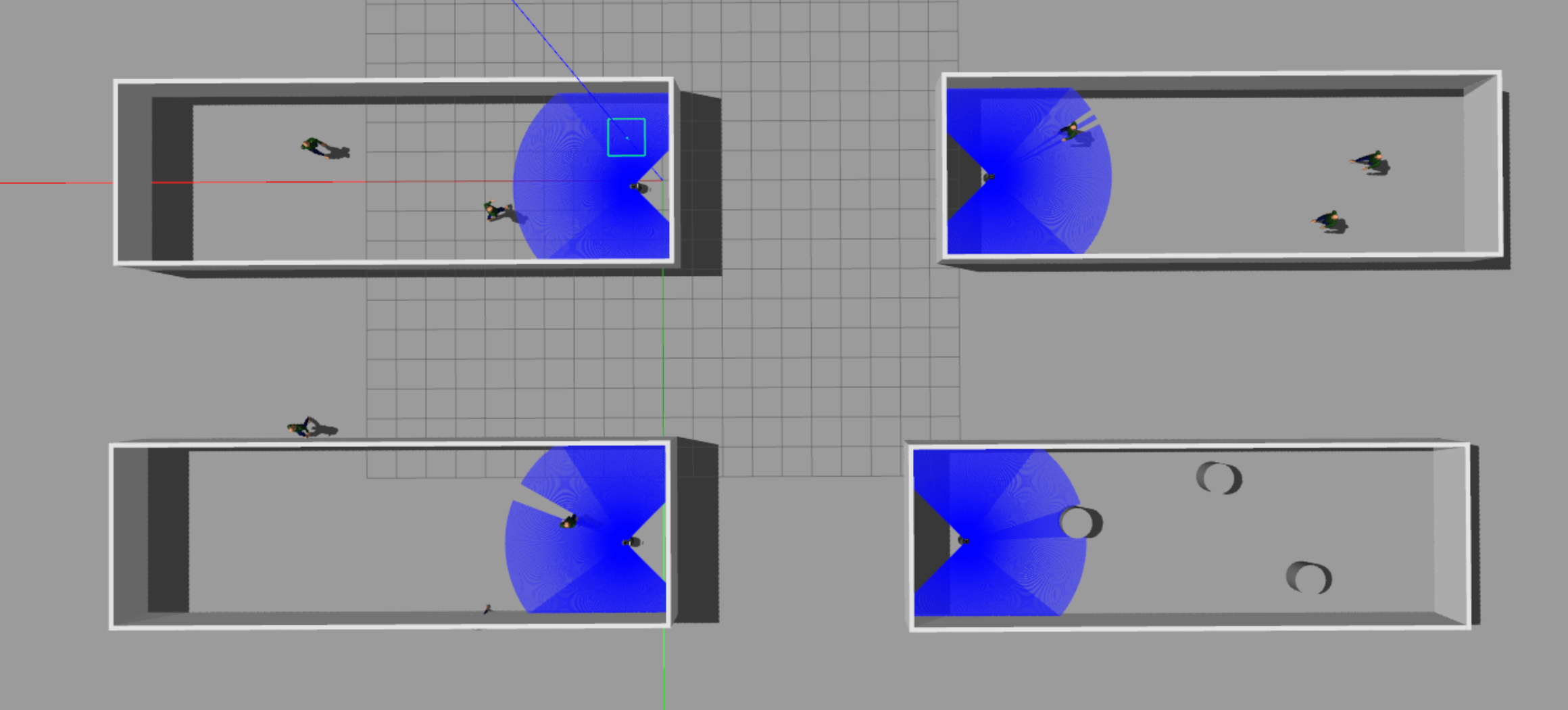}
      \caption {The training environment used to train the DWA-RL policy. We use four robots in parallel to collect the data for policy training, each facing different static and mobile obstacles.}
      \label{fig:trainingEnv}
    %   \vspace{-5pt}
\end{figure}

\begin{figure}[t]
      \centering
      \includegraphics[width=8cm, height=4cm]{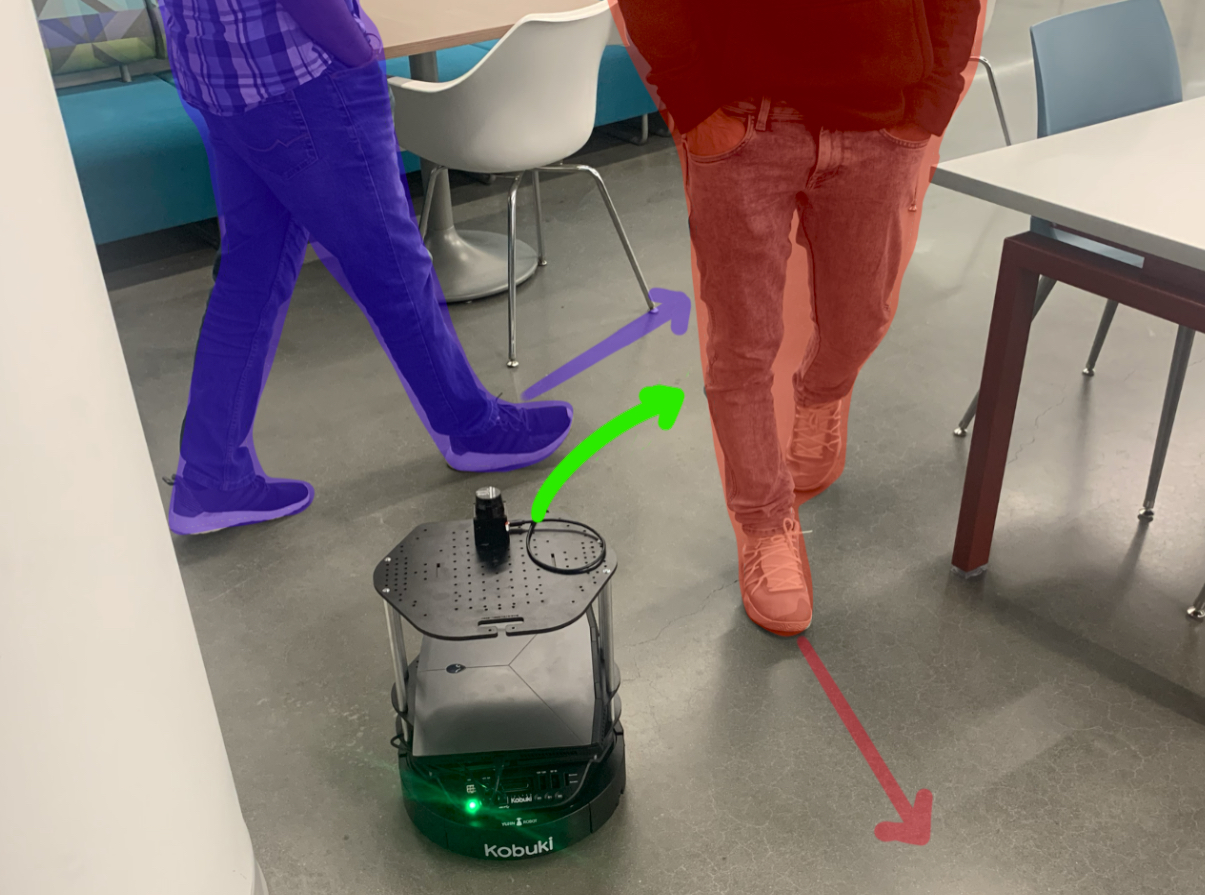}
      \caption {\small{DWA-RL tested on a Turtlebot in a dense scenario. This shows that DWA-RL can be easily transferred from simulations to different real-world robot platforms. The trajectories of the obstacles is shown in blue and red. The robot's trajectory is shown in green. }}
      \label{fig:turtlebot}
    %   \vspace{-5pt}
\end{figure}

\begin{figure*}[t]
\includegraphics[height=1.1in, width=\linewidth]{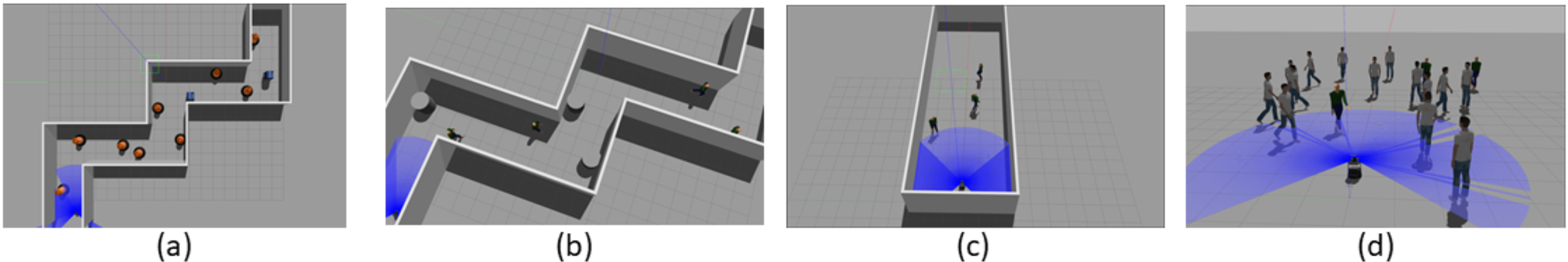}
\caption{\small{Different testing scenarios used to evaluate the collision avoidance approaches \textbf{(a)}: Zigzag-Static Scenario; \textbf{(b)}: Occluded-Ped scenario where the dynamic obstacles are suddenly introduced; \textbf{(c)}: Sparse-Dynamic scenario; \textbf{(d)} Dense-Dynamic environment contains the combination of static and dynamic obstacles.} }
\label{fig:Testing-Scenarios}
% \vspace{-5pt}
\end{figure*}

% \begin{figure}[t]
%       \centering
%       \includegraphics[width=\columnwidth,height=4.0cm]{Images/ObstacleSpace2.png}
%       \caption { \small{Obstacle set construction. \textbf{(a)} The change in the location of the obstacle in the past n-1 time instances with respect to the location of the robot in the current time instance. The red region ($\mathcal{R}$) and the green region ($\mathcal{G}$) denote the regions of high risk and low risk of collisions respectively. They depend on the relative positions and motions of the robot and the obstacle. \textbf{(b)} The list of the obstacle sets obtained at various time instances each column corresponds to the location of the obstacles at particular time instance}}
%       \label{fig:ObstacleLocations}
%       \vspace{-10pt}
% \end{figure}

\subsection{Testing Scenarios}
 We evaluate DWA-RL and compare with prior methods in the following scenarios (see Fig.\ref{fig:Testing-Scenarios}).

\textbf{Zigzag-Static:}
This scenario contains several sharp turns with a number of static obstacles to resemble a cluttered indoor environment.

\textbf{Occluded-Ped:}
This scenario contains several sharp turns and two pedestrians who could be occluded by the walls. 

\textbf{Sparse-Dynamic:}
This scenario contains 4 walking pedestrians in a corridor-like setting moving at 45$^{\circ}$ or 90$^{\circ}$ angles with the line connecting the robot's start and goal locations.  

\textbf{Dense-Dynamic}
This scenario contains 17 pedestrians in an area of $13 \times 8 m^2$ who could be static or moving and resembles dense dynamic outdoor environments. 

\subsection{Evaluation Metrics}
We compare our approach with: (i) Dynamic Window Approach~\cite{DWA} (ii) Long et al.'s method~\cite{Multirobot_collison_avoidance}. We also provide ablation studies to demonstrate the effects of our various design choices while formulating the observation space and reward function. We use the following metrics to compare the methods and the ablation study models.

\begin{itemize}
\item \textbf{Success Rate} - The number of times the robot reached its goal without colliding with obstacles over 50 trials. The obstacles' initial positions are randomly assigned in each trial. 

\item \textbf{Average Trajectory Length} - The total distance traversed by the robot, until the goal is reached, averaged over the number of successful trials.

\item \textbf{Average Velocity} - It is the trajectory length over the time taken to reach the goal in a successful trial. 
\end{itemize}

\subsection{Analysis and Comparison}
% The performance comparison of DWA-RL versus the prior approaches in various test environments is show in Table. \ref{tab:comparison_table}. The results of our ablation study showing the benefits of adding positive reinforcement ($b_t *  r_{heading}$ term) in equation \ref{eqn:spatial-awareness} and using a 4-layer observation space instead of 3 are given in the Table. \ref{tab:positive_reinforcement} and \ref{tab:observation_space_split} respectively.
The results of our comparisons and ablation studies are shown in tables \ref{tab:comparison_table}, \ref{tab:positive_reinforcement} and \ref{tab:observation_space_split}.

\begin{table}
\resizebox{\columnwidth}{!}{%
\begin{tabular}{|c|c|c|c|c|c|} 
\hline
\textbf{Metrics} & \textbf{Method} & \multicolumn{1}{|p{1cm}|}{\centering \textbf{Zigzag} \\ \textbf{Static}} & \multicolumn{1}{|p{1cm}|}{\centering \textbf{Occluded} \\ \textbf{Ped}} & \multicolumn{1}{|p{1cm}|}{\centering \textbf{Sparse} \\ \textbf{Dynamic}} & \multicolumn{1}{|p{1cm}|}{\centering \textbf{Dense} \\ \textbf{Dynamic}}\\ [0.5ex] 
\hline
\multirow{3}{*}{\rotatebox[origin=c]{0}{\makecell{\textbf{Success}\\\textbf{Rate}}}} & DWA-RL  & 1.0 & 1.0 & 0.54 & 0.4   \\
 & DRL & 1.0 & 0.76 & 0.3342 & .31 \\
%  & DRL & 0.3342 & - & - & .31 \\
 & {DWA} & 1.0 & 0.9 & 0.42 & 0.3 \\
\hline

\multirow{3}{*}{\rotatebox[origin=c]{0}{\makecell{\textbf{Avg}\\\textbf{Traj.}\\\textbf{Length (m)}}}} & DWA-RL & 28.85 & 27.26 & 11.95 & 12.23\\
 & DRL & 26.89 & 25.63 & 12.1 & 11.57 \\
%  & DRL & 12.1 & - & - & 11.57 \\
 & {DWA} & 26.62 & 25.07 & 11.99 & 12.81 \\
\hline

\multirow{3}{*}{\rotatebox[origin=c]{0}{\makecell{\textbf{Avg}\\\textbf{Velocity (m/s)}}}} & DWA-RL & 0.38 & 0.46 & 0.42  & 0.37\\
 & DRL & 0.42 & 0.51 & 0.37 & 0.503 \\
% & DRL & .37 & - & - & .503 \\
 & {DWA} & 0.435 & 0.568 & 0.6 & 0.64 \\
\hline

% \multirow{3}{*}{Mean Time} & Depth Camera  & N/A & 15.60 & 43.6 \\
%  & Depth Camera + Lidar & N/A & 21.44 & 41.48 \\
%  & {DenseCAvoid} & N/A & 22.75 & 45.68 \\
% \hline
\end{tabular}
}
\caption{\small{Relative performance of DWA-RL versus DWA \cite{DWA} and Long et. al's method \cite{Multirobot_collison_avoidance}.}
}
\label{tab:comparison_table}
% \vspace{-10pt}
\end{table}

From table \ref{tab:comparison_table}, we observe that in terms of success rate all approaches perform well in the Zigzag-Static scenario. However, in the environments with mobile obstacles, DWA-RL collides significantly less number of times. This is because DWA-RL considers obstacles' motion over time (in the observation space) and computes velocities that avoid the region in front of the obstacle (reinforced in reward function). DWA and Long et al.'s method try to avoid the obstacles from in-front and collide, especially in the Occluded-Ped scenario, where obstacles are introduced suddenly. Even with limited temporal information, DWA-RL always guides the robot in the direction opposite to the obstacle's motion, thereby reducing the chances of a collision. DWA-RL achieves this while maintaining a comparable average trajectory lengths and velocities for the robot.

% Since DWA does not consider the environment's state's evolution while navigating through the dynamic obstacles, it collides much more often as the density of the dynamic obstacles increases in the environment when compared with our method. Our method also surpasses Long et. al's method due to its more spatially aware velocities which navigates the robot behind an obstacle. This trend continues in the Occluded-Ped scenario where our method successfully avoids suddenly introduced pedestrians.  

% In the Occluded-Ped test scenario, the pedestrians are suddenly introduced in the sensing range of the lidar sensor. DWA and Long et al.'s method~\cite{Multirobot_collison_avoidance} cannot handle the occluded pedestrians well. However, our approach with limited temporal information can still perform better.

\textbf{Ablation Study for the Positive Reinforcement:}  
We compare two policies trained with and without the positive reinforcement (PR) ($|b_t| *  r_{spatial}$) term in equation \ref{eqn:spatial-awareness} in different test environments. From Table \ref{tab:positive_reinforcement}, we observe that the policy trained with PR outperforms the model trained without it in all the test environments. The policy trained without PR mostly tries to avoid an obstacle by navigating in-front of it, predominantly resulting in collisions.    

\begin{table}
\resizebox{\columnwidth}{!}{
\begin{tabular}{|c|c|c|c|c|c|} 
\hline
\textbf{Metrics} & \textbf{Method} & \multicolumn{1}{|p{1cm}|}{\centering \textbf{Zigzag} \\ \textbf{Static}} & \multicolumn{1}{|p{1cm}|}{\centering \textbf{Occluded} \\ \textbf{Ped}} & \multicolumn{1}{|p{1cm}|}{\centering \textbf{Sparse} \\ \textbf{Dynamic}} & \multicolumn{1}{|p{1cm}|}{\centering \textbf{Dense} \\ \textbf{Dynamic}}\\ [0.5ex] 
\hline
\multirow{2}{*}{\rotatebox[origin=c]{0}{\makecell{\textbf{Success}\\\textbf{Rate}}}} & With PR  & 1.0 & 1.0 & 0.54 & 0.42  \\
& Without PR & 0.86 & 0.64 & 0.44 & 0.4  \\
\hline

\multirow{2}{*}{\rotatebox[origin=c]{0}{\makecell{\textbf{Avg Traj.}\\\textbf{Length (m)}}}} & With PR  & 28.85 & 27.26 & 11.95 & 12.46\\
 & Without PR & 28.12 & 27.86 & 11.6 & 12.78  \\
\hline

\multirow{2}{*}{\rotatebox[origin=c]{0}{\makecell{\textbf{Avg}\\\textbf{Velocity (m/s)}}}} & With PR  & 0.37 & 0.47 & 0.42 & .38 \\
 & Without PR & 0.34 & 0.41 & 0.41  & .34 \\
\hline

% \multirow{3}{*}{Mean Time} & Depth Camera  & N/A & 15.60 & 43.6 \\
%  & Depth Camera + Lidar & N/A & 21.44 & 41.48 \\
%  & {DenseCAvoid} & N/A & 22.75 & 45.68 \\
% \hline

\end{tabular}
}
\caption{\small{Ablation study showing relative performance of the models trained with positive reinforcement (PR) and without positive reinforcement.
}}
\label{tab:positive_reinforcement}
% \vspace{-10pt}
\end{table}
% \vspace{-15pt}

\textbf{Ablation Study for the Observation Space:} Our observation space uses four matrices stacked together as show in Fig. \ref{flow_chart} which include velocities and the obstacle and goal-alignment costs. We compare this formulation with one which uses three matrices; the linear and angular velocity matrices and a total cost matrix stacked together. The total cost matrix is the sum of the obstacle and goal-alignment cost matrices. The results for both the policies are shown in Table \ref{tab:observation_space_split}. We observe that the 4-matrix formulation outperforms the 3-matrix formulation in all the scenarios. This is because, the information about environmental obstacles is better imparted into the policy when the obstacle cost is provided separately.  

\begin{table}
\resizebox{\linewidth}{!}{
\begin{tabular}{|c|c|c|c|c|c|} 
\hline
\textbf{Metrics} & \textbf{Method} & \multicolumn{1}{|p{1cm}|}{\centering \textbf{Zigzag} \\ \textbf{Static}} & \multicolumn{1}{|p{1cm}|}{\centering \textbf{Occluded} \\ \textbf{Ped}} & \multicolumn{1}{|p{1cm}|}{\centering \textbf{Sparse} \\ \textbf{Dynamic}} & \multicolumn{1}{|p{1cm}|}{\centering \textbf{Dense} \\ \textbf{Dynamic}}\\ [0.5ex] 
\hline
\multirow{2}{*}{\rotatebox[origin=c]{0}{\makecell{\textbf{Success}\\\textbf{Rate}}}} & 3-matrix & 0.82 & 0.94 & 0.36 & 0.36   \\
& 4-matrix & 1.0 & 1.0 & 0.54 & 0.42  \\
\hline

\multirow{2}{*}{\rotatebox[origin=c]{0}{\makecell{\textbf{Avg Traj.}\\\textbf{Length (m)}}}} & 3-matrix & 27.93 & 27.04 & 11.56 & 12.34\\
 & 4-matrix & 28.85 & 27.26 & 11.95 & 12.46 \\
\hline

\multirow{2}{*}{\rotatebox[origin=c]{0}{\makecell{\textbf{Avg}\\\textbf{Velocity (m/s)}}}} & 3-matrix & 0.38 & 0.44 & 0.46 & .37\\
 & 4-matrix  & 0.37 & 0.47 & 0.42 & 0.38 \\
\hline

% \multirow{3}{*}{Mean Time} & Depth Camera  & N/A & 15.60 & 43.6 \\
%  & Depth Camera + Lidar & N/A & 21.44 & 41.48 \\
%  & {DenseCAvoid} & N/A & 22.75 & 45.68 \\
% \hline

\end{tabular}}
\caption{\small{Ablation study showing relative performance of the models trained with 3-matrix and 4-matrix observation space formulation.
}}
\label{tab:observation_space_split}
% \vspace{-10pt}
\end{table}

\textbf{Dynamics Constraints Violation} 
The Fig. \ref{fig:LinearVelocitiesDRL} and \ref{fig:AngularVelocitiesDRL} shows the graph of linear and angular velocities generated by the Long et. al's method \cite{Multirobot_collison_avoidance} in the Dense Dynamic environment. We observe that the output angular velocities lie outside the maximum and minimum attainable angular velocities of the robot 61\% of the times, leading to oscillatory/jerky motion. DWA-RL on the other hand, produces velocities that always lie within the attainable velocity range (Fig. \ref{fig:DWA-RL-LinearVelocities} and \ref{fig:DWA-RL-AngularVelocities}). This results in considerably smoother robot trajectories.    

% We compare the number of dynamics constraints violations by DWA-RL and Long et. al's method \cite{Multirobot_collison_avoidance}. The Fig. \ref{fig:LinearVelocities} shows the graph of linear velocities generated by the Long et. al's method in the Sparse Dynamic environment. The red circles marked on the figure shows the time instants at which the method computed velocities that are impossible for the robot to execute (gray curve outside the orange and blue curves). We observe that when the robot starts from rest or when the robot stops to avoid collisions, it computed such velocities. In the complete trajectory from the start to the goal location, the DRL method violated the dynamics constraints for 10 \% of the total time instants. Our method, on the other hand, always computed dynamically feasible velocities. 

\begin{figure}[t]
      \centering
      \includegraphics[width=\columnwidth,height=4.25cm]{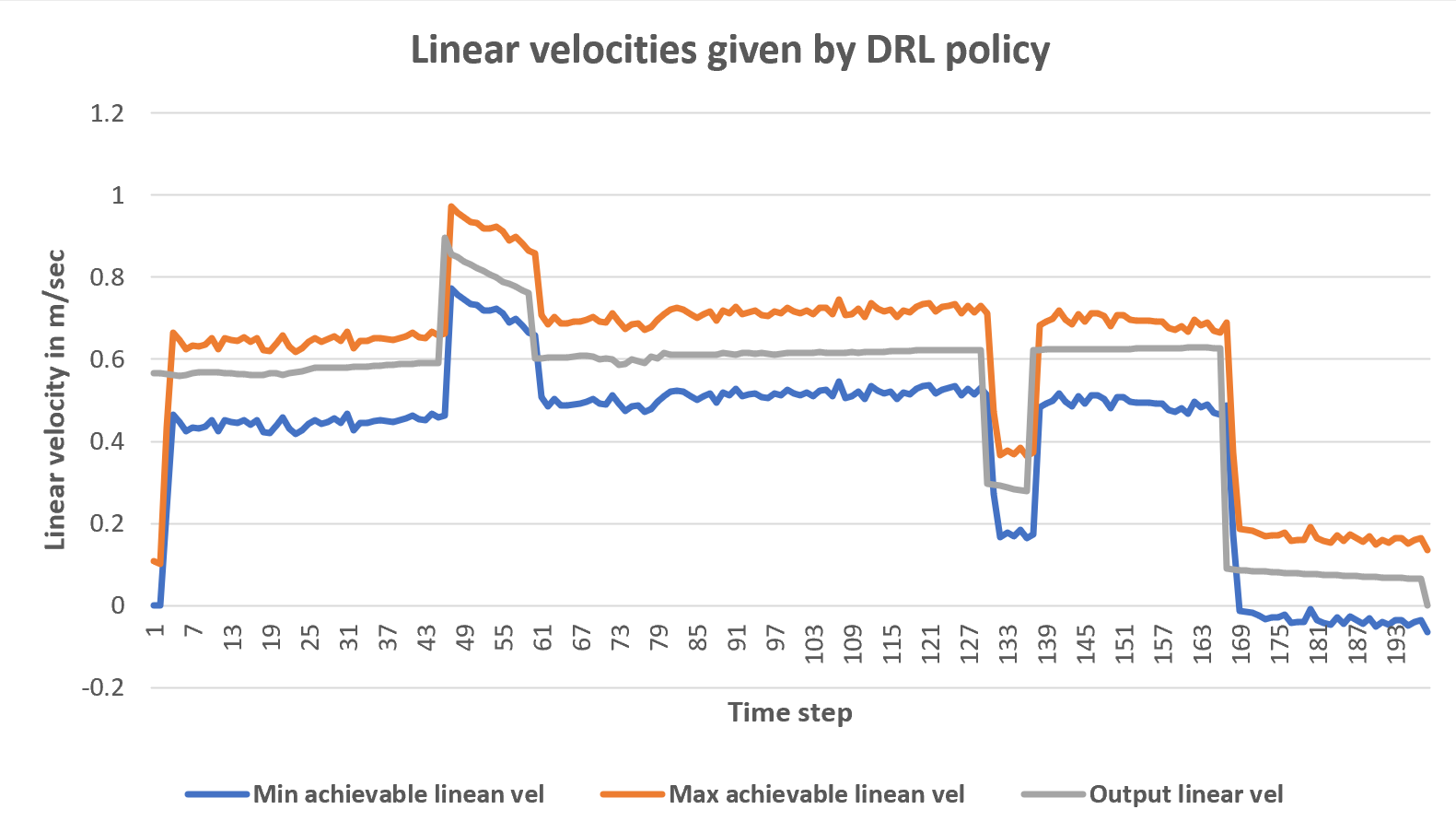}
      \caption {\small{Graph showing the change in the linear velocity generated by Long et. al's approach along with the maximum and the minimum achievable velocity at that time instant. For this experiment, we use Turtlebot 2 with max angular velocity, min angular velocity and max angular acceleration limit of 3.14 rad/s, -3.14 rad/sec and 2 $rad/s^2$ respectively.}}
      \label{fig:LinearVelocitiesDRL}
    %   \vspace{-15pt}
\end{figure}

\begin{figure}[t]
      \centering
      \includegraphics[width=\columnwidth,height=4.25cm]{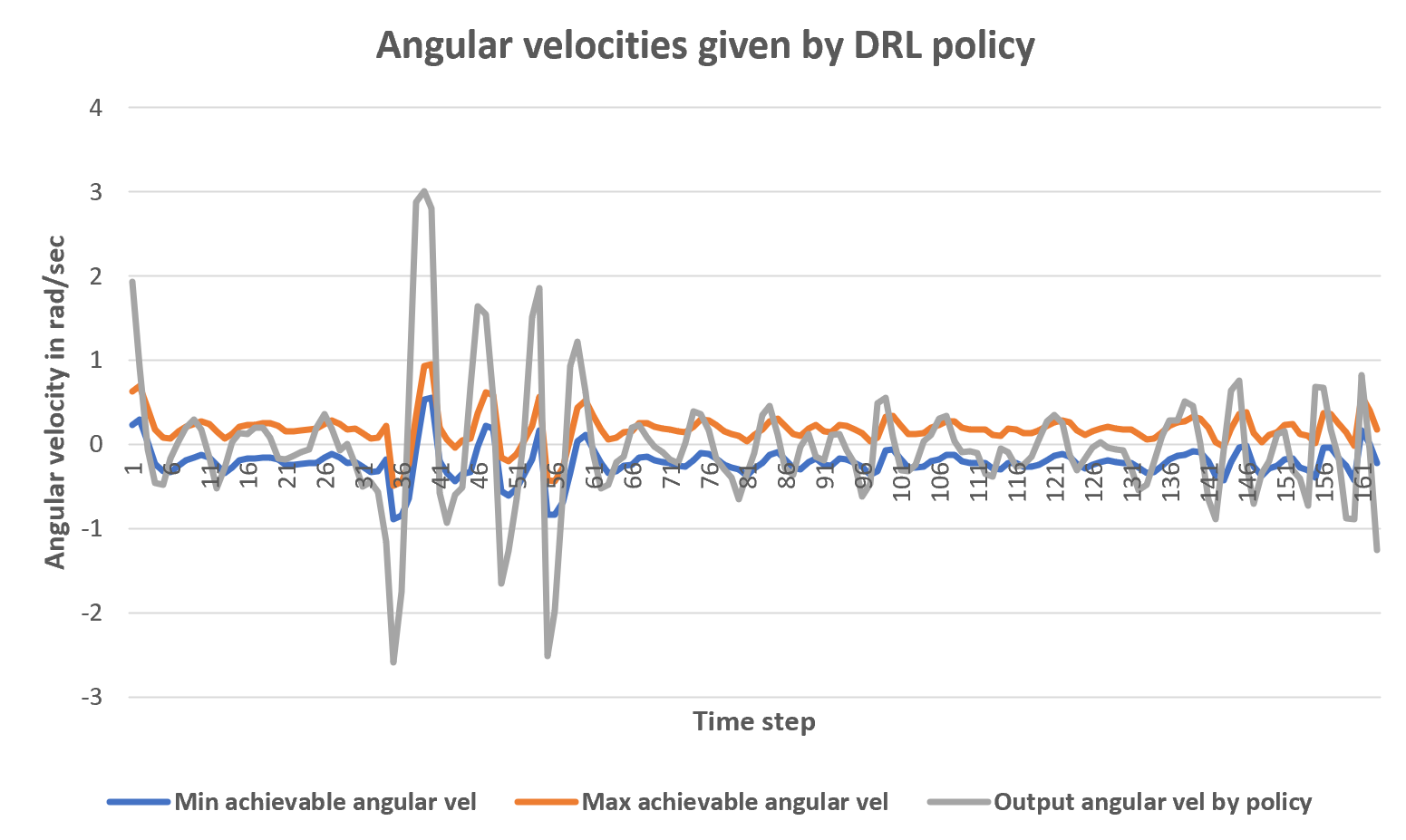}
      \caption {\small{Graph showing the change in the angular velocity generated by the Long et. al's approach along with the maximum and the minimum achievable velocity at that time instant.}}
      \label{fig:AngularVelocitiesDRL}
    %   \vspace{-15pt}
\end{figure}

\begin{figure}[t]
      \centering
      \includegraphics[width=\columnwidth,height=4.25cm]{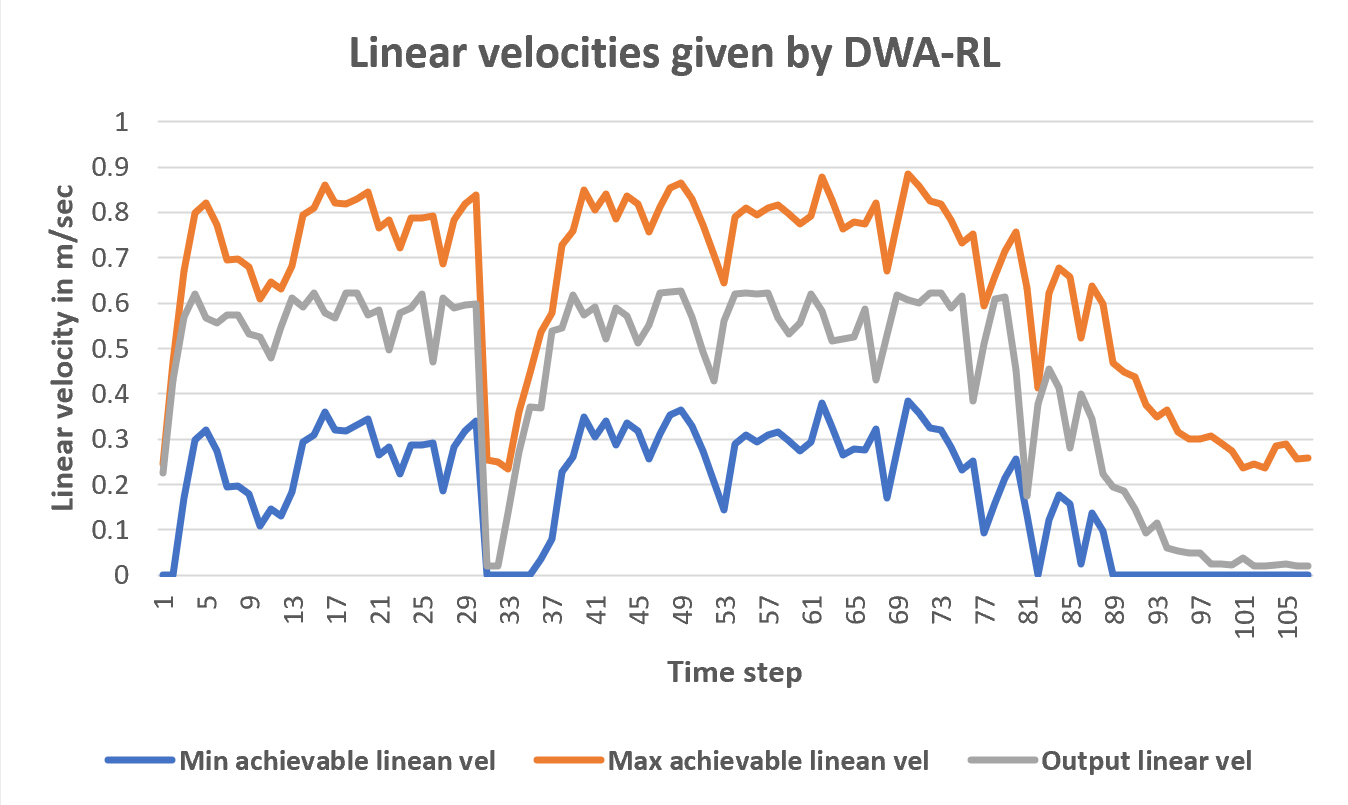}
      \caption {\small{Graph showing the change in the linear velocity generated by the DWA-RL approach along with the maximum and the minimum achievable velocity at that time instant. The plot shows that the output velocity of the DWA-RL policy is always within the achievable velocity range at any time instant.}}
      \label{fig:DWA-RL-LinearVelocities}
    %   \vspace{-15pt}
\end{figure}

\begin{figure}[t]
      \centering
      \includegraphics[width=\columnwidth,height=4.25cm]{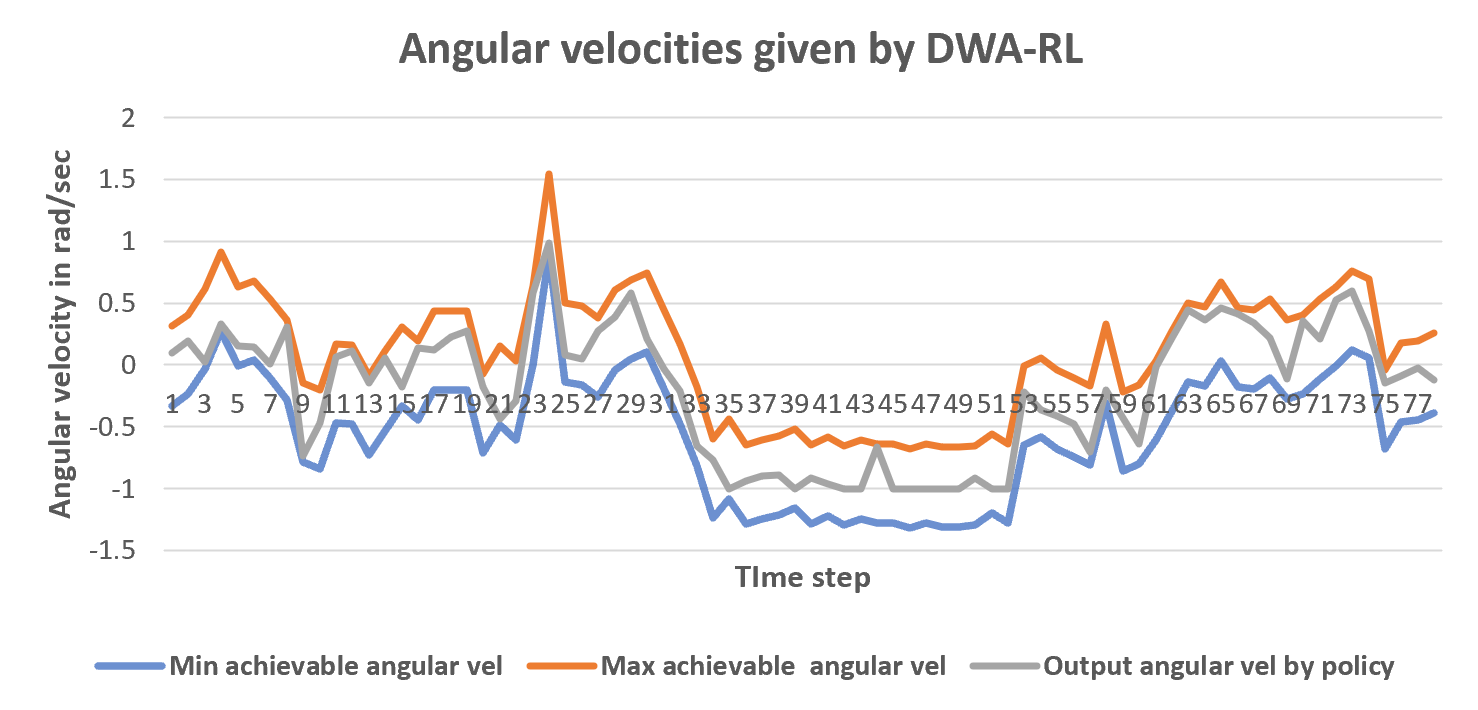}
      \caption {\small{Graph showing the change in the angular velocity generated by the DWA-RL approach along with the maximum and the minimum achievable velocity at that time instant. The plot shows that the output velocity of the DWA-RL policy is always within the achievable velocity range at any time instant.}}
      \label{fig:DWA-RL-AngularVelocities}
    %   \vspace{-15pt}
\end{figure}

\section{Conclusions, Limitations and Future Work}
We present a novel formulation of a Deep Reinforcement Learning policy that generates dynamically feasible and spatially aware smooth velocities. Our method addresses the issues associated with learning-based approaches (dynamic infeasible velocities) and the classical Dynamic Window Approach (sub-optimal mobile obstacle avoidance). We validate our approach in simulation and on real-world robots, and compare it with the other collision avoidance techniques in terms of collision rate, average trajectory length and velocity, and dynamics constraints violations.  

Our work has a few limitations which we wish to address in the future. For instance, the model needs at least few observations to compute a velocity that is spatially aware. If the obstacles are suddenly introduced in the field of view of the robot, the robot might freeze. Efficiency of this approach with an integrated global planner is yet to be studied. Also, the current model uses Convolutional Neural Network as layers in the policy network, but the use of LSTM \cite{LSTM} could improve the processing of the temporal data from the observation space. 
% in future we would like to train a model based on  policy 

% the policy wouldn't have enough information to make informed decisions and
% but in future we would like to train a model based on LSTM \cite{LSTM} policy 

% and we would also like to modify the reward function to get better navigation behavior.     

\bibliographystyle{IEEEtran}
\bibliography{References}

\end{document}